\renewcommand\footnotetextcopyrightpermission[1]{} % removes footnote with conference information in first column
\newcommand*{\prob}{\mathsf{P}}
\begin{document}

\title[Split: Disentangling Brand-Customer Interactions]{Split: Inferring Unobserved Event Probabilities for Disentangling Brand-Customer Interactions}

\author{Ayush Chauhan}
\affiliation{
    \institution{Adobe Research}
    \city{Bangalore}
    \country{India}
    }
\email{ayuchauh@adobe.com}

\author{Aditya Anand}
\affiliation{
    \institution{IIT Kharagpur}
    \city{Kharagpur}
    \country{India}
    }
\email{adityaanand1998g@gmail.com}
\authornote{The author was at Adobe Research at the time of this work.}

\author{Shaddy Garg}
\affiliation{
    \institution{Adobe Research}
    \city{Bangalore}
    \country{India}
    }
\email{shadgarg@adobe.com}

\author{Sunny Dhamnani}
\affiliation{
    \institution{Georgia Institute of Technology}
    \city{Atlanta}
    \country{USA}
    }
\email{dhamnanisunny.1402@gmail.com}
\authornotemark[1]

\author{Shiv Kumar Saini}
\affiliation{
    \institution{Adobe Research}
    \city{Bangalore}
    \country{India}
    }
\email{shsaini@adobe.com}

%% By default, the full list of authors will be used in the page
%% headers. Often, this list is too long, and will overlap
%% other information printed in the page headers. This command allows
%% the author to define a more concise list
%% of authors' names for this purpose.
\renewcommand{\shortauthors}{Ayush, et al.}

%%
%% The abstract is a short summary of the work to be presented in the
%% article.
\begin{abstract}
Often, data contains only composite events composed of multiple events, some observed and some unobserved. 
For example, search ad click is observed by a brand, whereas which customers were shown a search ad - an actionable variable - is often not observed.
%\shiv{TODO: CAN WE GET A CITATION FOR ABOVE IN INTRODUCTION}
%One example is the case when the goal is to estimate the effect of showing a search ad when only a search click is observed. A search click is observed when an individual searches, a brand shows an ad, and then the individual clicks on the ad.
In such cases, inference is not possible on unobserved event. 
This occurs when a marketing action is taken over earned and paid digital channels.
Similar setting arises in numerous datasets where multiple actors interact. 
One approach is to use the composite event as a proxy for the unobserved event of interest.
However, this leads to invalid inference.
This paper takes a direct approach whereby an event of interest is identified based on information on the composite event and aggregate data on composite events (e.g. total number of search ads shown).
This work contributes to the literature by proving identification of the unobserved events' probabilities up to a scalar factor under mild condition.
We propose an approach to identify the scalar factor by using aggregate data that is usually available from earned and paid channels.
The factor is identified by adding a loss term to the usual cross-entropy loss.
We validate the approach on three synthetic datasets. 
In addition, the approach is validated on a real marketing problem where some observed events are hidden from the algorithm for validation. 
The proposed modification to the cross-entropy loss function improves the average performance by 46\%.

\end{abstract}

\keywords{causal inference, customer behavior, missing data, composite events, identification}

%%
%% This command processes the author and affiliation and title
%% information and builds the first part of the formatted document.
\maketitle

\section{Introduction}
%Points to be made
%% 1. Example from marketing channel
%% 2. This is a general problem
%% 3. Where else we can find the examples where such problem will exist - 
%need a few examples
% To do
%% Show 

%This paper considers the general setting where data contains only composite events which are composed of multiple events, some observed and some unobserved. 
%In such cases, inference is not possible on unobserved event.
%This paper studies the conditions conditions under which the unobserved events can be identified. 
%This setting occurs in a variety of datasets and most common in cases where data is generated across multiple platform.
% other examples
%     Settings in which this problem is faced - Cite - Attribution papers, Causal Inference papers, Criteo challenge

A brand interacts with a customer on owned channels, for example own website, as well as earned and paid channels. 
The main focus of marketing on earned and paid channel is to bring customers to owned channels. 
Some of the actions a marketer takes on earned and paid channels are bidding for a search, display or a social media ad, or send an email through email marketing vendors. 
Invariably, the marketer observes the result of a marketing action only when the subjects of the marketing action visit owned channels. 
The information on who saw an ad is difficult to get for some channels (search, display) and impossible for others (social media platforms)\cite{Bianchi2018}. 
Often, the marketer has access to aggregate data on how many customers saw the ad. 
However, in absence of targeting data, it is not possible for a marketer to make data-driven decisions which results in sub-optimal targeting.

The goal of this paper is to provide theoretical and empirical identification approaches for conducting inference on unobserved individual-level targeting data using only composite events such as search ad click and aggregated data on targeting. Note that the behavioral and demographic data are available for each individual but whether someone was shown an ad or not is not available. In past, this problem has been dealt with by using the composite event as a proxy for the event of interest \cite{ji2017additional}. 
This leads to over-estimation of the effect of a marketer action. 
\cite{besbes2019shapley} uses a latent state model to estimate the effect of a search ad shown. 
We take a direct approach of identifying unobserved event(s) when only a composite event is observed.
Note that, if marketer actions are observed, this is a standard causal inference problem that can be solved by estimating the individual treatment effect from observational data using techniques such as \cite{Shalit2017EstimatingIT}, \cite{Saini2019MTE}.

In this paper, the starting point is a setting where the observed composite event is an interaction of two (or more) unobserved events.
%OLD The main intuition behind the solution is based on exploiting independent sources of variation in data that affects the two unobserved events.
The solution is based on the assumption that there exist two mutually independent sources of variation in data such that one such source affects only one of the two unobserved event and the second source affects only the other unobserved event. 
%Note that, we do not need to identify these independent sources of variation. Mere existence is sufficient.
In a dataset with a large number of variables, such an assumption is not restrictive because we can easily find variables that affect one and not the other event and vice versa.

We prove identification of the unobserved event probabilities up to a scale factor under mild conditions. 
The scale factor can be identified if the probability of the unobserved events is arbitrarily close to one for a subset of customers.
This is a restrictive assumption because we are unlikely to observe high probability of the events that are of interest to us.
To overcome the restrictive assumption, we use additional data that is easily available.
In these settings, often aggregate level targeting data is available.
In particular, we use fraction of eligible customers who were shown a particular ad as the additional data. 
These aggregate level data are readily available through Demand Side Platforms as well as social media platforms \cite{fb2020}.
We propose a novel solution wherein the aggregate data is used in conjunction with individual customer level data to achieve identification of the unobserved probabilities without using the restrictive assumption. 

The approach is validated on multiple synthetic datasets created by changing the settings of the identification assumptions. 
In addition, we apply the approach on two real customer interaction datasets. 
The first dataset is an email interaction dataset where we observe email send, open, click.
We hide email-send information from the algorithm and use it for validating the approach.
The second customer interaction dataset is based on paid and organic search clicks.
In this case, the observed events are complex functions of observed and unobserved events' probabilities. 
We do not observe which customer is served a search ad.
Hence, the Search data cannot be used directly for validating the approach.
However, we can test consistency of the results across different runs with different starting value. 
Our hypothesis is that when the unobserved probabilities are not identified, different initialization will result in different estimates of the unobserved probabilities. 

In this paper we make four contributions. 
First, we prove that the unobserved events' probabilities are identified up to a scale under reasonable conditions.
Second, we propose an additional aggregate loss term in the cross-entropy loss function that helps in identification of the scale. 
The aggregate loss term enforces a soft constraint based on aggregate targeting data on a model to estimate individual level probabilities. 
Third, the approach is validated on synthetic data. The proposed approach shows 36\% to 44\% reduction in error on an average, as measured by MSE and MAPE respectively, with respect to a baseline approach for the most realistic setting. For other settings, the results are even better.
Fourth, the approach is applied on an email marketing dataset and a search ad dataset. The proposed approach reduces the MSE by 69\% for unobserved probabilities of email-send and by 14\% for the unobserved probabilities of email-open given email-sent. 
We show how to apply the approach to a search ad problem where ad targeting data is not available. 
%We validate the approach on this dataset by comparing consistency across runs. The proposed approach improves consistency of results, as measured by fraction of likewise classified samples, \ayush{by 1\% on average.}

The next section puts this work in the context of previous work.

\vspace{-0.5em}
\section{Related Work}
%Here is the papers doing attribution: \cite{Shalit2017EstimatingIT}.
This paper provides an approach for identifying probabilities of unobserved events when intersection of the events is observed. We did not find any papers that directly address the problem. However, the problem studied here arises in many settings. In this section, the approaches that are followed in these settings are discussed.

The problem of unobserved events and observed composite events occurs in marketing attribution. 
Primarily, two approaches are employed.
The first one is where the composite event is taken as a proxy. This essentially means ignoring the problem. There are a large number of papers that follow this approach. Most of these attribute a success event to marketing touches \cite{ji2017additional}, \cite{zhao2019revenue}, \cite{Ren2018DualAttention}. 
% Todo: Describe these papers and what is the problem
The second approach is where it is assumed that a customer makes decision based on her latent states \cite{besbes2019shapley}. In these papers, a latent state model is estimated to control for the bias introduced by the missing events. 
There are other papers that use indirect approaches to solve the missing data problem \cite{danaher2018delusion}, \cite{abhishek2017multi}, \cite{kakalejvcik2018multichannel}.
The fraction of conversions tends to be higher for the marketing touch points which are likely to occur during the later stages of the purchase cycles (e.g. search ad click) as compared to the touch points early in the purchase cycle (e.g. email).
A naive approach will lead to over-attribution of a success event to marketing channels such as search ad.
The goal of these papers is to correct for the biased attribution of a success event.
The typical analysis is done by incorporating touches across multiple marketing channels.
If the goal is to study the effect of a single marketing channel, such as search ad, these approaches are not applicable.
%\shiv{TODO: Why we do not include any of the above in baseline}.
%\shaddy{Another area where missing data problem is explored are in papers like \cite{bertsimas2017predictive}, where the approach of imputation of missing data is considered as an optimization task. This literature uses different cost functions derived from KNN, SVM and Optimal Decision Tree models. \cite{smieja2018processing} model the uncertainity on missing attributes by their probability density functions learned from the activations of the neural network and then use this information to represent the missing data. Other literature in this area mainly talks about imputation using KNN and modified regression techniques. This, however is a completely different problem from what we are trying to solve because our problem deals with the case of missing events rather than some missing values of an observed event.}

Another area where missing events needs to be accounted for is in causal inference literature. However, the goal of this literature is not to do inference on the unobserved events but to use techniques to control for the unobserved confounders so that the inference on the observed data is valid. \cite{bertsimas2017predictive} and \cite{smieja2018processing} discuss another area where missing data problem arises. They talk about imputation or modelling the missing data using KNN, modified regression, and other techniques. Ours is a different problem because we deal with the case of missing events rather than some missing values of an observed event.

There is a large literature on inferring individual-level behavior using aggregate data. 
This research area is called Ecological Inference \cite{Schuessler10578}. 
The techniques studied in this area are not directly applicable because we \emph{have} individual level data as well as labels for at least some related events, but no labels for the events of interest.
We are not able to find any technique which is directly applicable for the problem at hand. 

In absence of baselines from the past literature, we will validate our approach by providing a proof and conducting experiments with and without proposed modifications to the cross-entrophy loss function.

\vspace{-0.5em}
% Notation: \mathbf{}, with capital letter for random vectors. 
\section{Model} 
\label{sec:model}
The main goal of this paper is identification of the unobserved events from  related observed events. The events observed in the real-world data can be simple or complex functions of the unobserved events of interest. Here, we first prove identifiability of the unobserved events in case of composite functions that one is likely to encounter in real world. We begin with a simple product of two unobserved functions and then extend it to other settings. Next, we present a solution strategy, based on the proof, to estimate the functions in practice.
\vspace{-0.5em}
%Let $f_n(\cdot)$, $g_n(\cdot)$, and $h_n(\cdot)$ be the estimates of $f(\cdot)$, $g(\cdot)$, and $h(\cdot)$, respectively.
\subsection{Identification}
\label{subsec:identifiability}
Let us begin with a simple example where the observed event is a product of two unobserved events:

\textit{Example 1:} Let $P_{a,c} = \prob(\texttt{Click} = 1, \texttt{Ad Shown} = 1 | \mathbf{X})$, $P_{c} = \prob(\texttt{Click} = 1 | \texttt{Ad Shown} = 1, \mathbf{X})$, and $P_{a} = \prob(\texttt{Ad Shown} = 1 | \mathbf{X})$, where $\mathbf{X}$ is a vector of exogenous or pre-determined random variables.
Then, $P_{a,c} = P_{c} * P_{a}$. The goal is to find out whether $P_{c}$ and $P_{a}$ are identifiable under the following set-up:
\begin{enumerate}
	\item The event corresponding to $P_{a,c}$ and the vector $\mathbf{X}$ are observed in data
	\item The events corresponding to $P_{c}$ and $P_{a}$ are not observed
\end{enumerate}
Let $\mathbf{X_1}, \mathbf{X_2} \subseteq \mathbf{X}$ are observed random vectors that determine $P_{c}$ and $P_{a}$ respectively. As before, the product of the two in turn determines $P_{a,c}$. Let $P_{c} = f(\mathbf{X_1})$ and $P_{a} = g(\mathbf{X_2})$. The composite event,
\begin{equation} \label{eq:prodprob}
P_{a,c} = f(\mathbf{X_1})*g(\mathbf{X_2}) = h(\mathbf{X_1},\mathbf{X_2})    
\end{equation}
Since the event corresponding to the composite event is observed in data, $h(\mathbf{X_1}, \mathbf{X_2})$ can be easily estimated using Supervised Learning approaches. Assuming that there is non-zero probability of the composite event taking place $\forall \mathbf{X_1}, \mathbf{X_2}$ and since $f(\cdot), g(\cdot)$ represent probability functions of the unobserved events, we have:
\begin{equation}
\label{eq:prob-constraint}
    f(\mathbf{X_1}), g(\mathbf{X_2}) \in (0, 1] \ \ \forall\ \ \mathbf{X_1}, \mathbf{X_2} \text{ in } supp(\mathbf{X_1}), supp(\mathbf{X_2}) \text{ resp.}
\end{equation}
where $supp(X)$ is the support of X. Next, we study the assumptions needed to identify $f(\cdot)$, $g(\cdot)$, given that the probability of the composite event is identified.
\begin{enumerate}
    \item[\textbf{A.1}] $\mathbf{X_1}$ and $\mathbf{X_2}$ have non-zero independent components which determine $f(\cdot)$ and $g(\cdot)$ respectively, i.e. $\mathbf{X_1}=\{\mathbf{u},\mathbf{v}\}$ and $\mathbf{X_2}=\{\mathbf{v},\mathbf{w}\}$ such that $\mathbf{u}, \mathbf{v}$ and $\mathbf{w}$ are pair-wise independent, and $\mathbf{u}\neq 0, \mathbf{w}\neq 0$ and partial derivative of $f(\cdot), g(\cdot)$ is non-zero with respect to $\mathbf{u}, \mathbf{w}$ respectively.
    \item[\textbf{A.2}] The set of covariates that determine $f(\cdot)$ and $g(\cdot)$ are known a-priori to the modeler, i.e. the break-up of $\mathbf{X}$ into $\mathbf{X_1}$ and $\mathbf{X_2}$ is known. 
\end{enumerate}
It is worth noting that instead of \textbf{A.1}, a weaker assumption of strong decomposability between $\mathbf{X_1}$ and $\mathbf{X_2}$ is sufficient for identifiability. We make the stronger assumption for better exposition.
\begin{theorem} \label{theorem:identuptoscale}
Given \textbf{A.1} and \textbf{A.2},
\begin{enumerate}
    \item[a)] $f(\mathbf{X_1})$ and $g(\mathbf{X_2})$ are identifiable up to a scale, and
    \item[b)] the unidentified scale parameter is a constant.
\end{enumerate} 
\end{theorem}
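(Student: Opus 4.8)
The plan is to linearize the multiplicative relation~\eqref{eq:prodprob} by taking logarithms. Since~\eqref{eq:prob-constraint} guarantees $f,g \in (0,1]$ and the composite probability $h = P_{a,c}$ is identified from data, I would set $F = \log f$, $G = \log g$, $H = \log h$ and work with the additive functional equation $H(\mathbf{u},\mathbf{v},\mathbf{w}) = F(\mathbf{u},\mathbf{v}) + G(\mathbf{v},\mathbf{w})$, where I have expanded $\mathbf{X_1} = \{\mathbf{u},\mathbf{v}\}$ and $\mathbf{X_2} = \{\mathbf{v},\mathbf{w}\}$ per \textbf{A.1}. The pairwise independence of $\mathbf{u},\mathbf{v},\mathbf{w}$ is what makes this legitimate: it forces the joint support to be the product $supp(\mathbf{u}) \times supp(\mathbf{v}) \times supp(\mathbf{w})$, so that I may fix any one coordinate and vary the others freely over a full rectangle, which is exactly what the differentiation argument below requires. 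Assumption \textbf{A.2} is used implicitly throughout, since the split of $\mathbf{X}$ into $\mathbf{X_1}$ and $\mathbf{X_2}$ is what tells me which coordinates to differentiate against.

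The first main step, giving part (a), is to exploit that $F$ does not depend on $\mathbf{w}$ and $G$ does not depend on $\mathbf{u}$. Differentiating the functional equation with respect to $\mathbf{u}$ yields $\partial_{\mathbf{u}} H = \partial_{\mathbf{u}} F$, an object that is identified (as $H$ is known) and that, as a consistency check, must be free of $\mathbf{w}$. The non-vanishing of $\partial f / \partial \mathbf{u}$ assumed in \textbf{A.1} guarantees this derivative is genuinely informative, so integrating it in $\mathbf{u}$ from a reference point recovers $F(\mathbf{u},\mathbf{v})$ up to an additive term that is constant in $\mathbf{u}$, i.e.\ a function $c(\mathbf{v})$ of the shared coordinate alone. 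Symmetrically, differentiating in $\mathbf{w}$ recovers $G(\mathbf{v},\mathbf{w})$ up to a function of $\mathbf{v}$, and evaluating $H = F + G$ at the reference point ties the two residual functions together. Exponentiating, $f$ is pinned down up to a multiplicative factor $s(\mathbf{v}) = e^{c(\mathbf{v})}$ and $g$ up to $1/s(\mathbf{v})$; this is precisely identifiability up to a scale.

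For part (b) I would argue uniqueness directly. Suppose $(f_1,g_1)$ and $(f_2,g_2)$ are any two admissible factorizations of the same $h$. Then $f_1/f_2 = g_2/g_1$, where the left-hand side is independent of $\mathbf{w}$ and the right-hand side is independent of $\mathbf{u}$, so each must equal a common factor depending at most on $\mathbf{v}$. The crux, and what I expect to be the main obstacle, is upgrading this factor from a function of $\mathbf{v}$ to a genuine constant as claimed. This is exactly where the full strength of \textbf{A.1} must be spent, and where the relationship to the \emph{strong decomposability} condition becomes relevant: when $\mathbf{X_1}$ and $\mathbf{X_2}$ share no component, the separability argument immediately forces the common factor to be constant, whereas in the presence of a shared $\mathbf{v}$ one must show that the remaining freedom cannot be absorbed into $\mathbf{v}$-dependence without violating the assumed independence and non-degeneracy structure. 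I would therefore concentrate most of the effort on making this collapse to a scalar rigorous; the log-linearization and the derivative-based separation underlying part (a) are comparatively routine once the product-support setup is in place.
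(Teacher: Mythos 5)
Your part (a) is correct and is, modulo the logarithm, the same argument as the paper's: the paper forms the ratio of the two factorizations to get Equation~\ref{eq:equality}, then varies $\mathbf{u}$ holding $\mathbf{v}$ and $\mathbf{w}$ fixed to conclude that $f'/f$ cannot depend on $\mathbf{u}$ (and, symmetrically, that $g'/g$ cannot depend on $\mathbf{w}$). Your log-linearization $H=F+G$ with differentiation in $\mathbf{u}$ and $\mathbf{w}$ reaches the identical intermediate conclusion---any two admissible factorizations differ by a factor $s(\mathbf{v})$ depending at most on the shared coordinates---at the cost of an extra smoothness-and-integration step that the direct ratio manipulation does not need.

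The genuine gap is part (b). You reduce the claim to showing that $s(\mathbf{v})$ collapses to a constant, declare this to be the crux, and then stop; ``I would concentrate most of the effort on making this rigorous'' is a plan, not a proof, so the proposal does not establish the very statement the theorem asserts. You should know, though, that your instinct about where the difficulty sits is accurate, and the paper's own proof does not resolve it either: it jumps in one sentence from ``the ratio is constant in $\mathbf{u}$ for fixed $\mathbf{v},\mathbf{w}$'' to ``the estimated functions are a constant multiple of the true functions,'' which is airtight only when $\mathbf{X_1}$ and $\mathbf{X_2}$ share no components, i.e.\ when $\mathbf{v}$ is empty and ``a function of $\mathbf{v}$ alone'' trivially means a constant. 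With a genuinely shared $\mathbf{v}$, constancy does not follow from \textbf{A.1} and \textbf{A.2}: take any non-constant $s(\mathbf{v})$ with values in $[1,2]$ and any admissible pair $(f,g)$ with $f\le 1/2$; then the pair $\left(s(\mathbf{v})\,f,\; g/s(\mathbf{v})\right)$ stays in $(0,1]$, satisfies every requirement of \textbf{A.1} (pairwise independence of $\mathbf{u},\mathbf{v},\mathbf{w}$, nonzero partials in $\mathbf{u}$ and $\mathbf{w}$), and reproduces the same $h$, yet differs from $(f,g)$ by a non-constant factor. So the collapse you were hoping to make rigorous is actually impossible in that regime; part (b) is provable exactly in the disjoint-components case (or under an added assumption ruling out $\mathbf{v}$-dependent rescalings), which is implicitly the situation the paper's one-line conclusion covers.
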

\begin{proof}
From \textbf{A.2}, one can model $f(\cdot)$ and $g(\cdot)$ as non-zero probability functions of $\mathbf{X_1}$ and $\mathbf{X_2}$ respectively. Let the estimated functions be $f^{'}(\mathbf{X_1})$ and $g^{'}(\mathbf{X_2})$. Since the composite event of their product is observed and given the large size of data with no sampling, the composite event probability can be estimated accurately using a flexible estimation approach. Therefore, we have $f^{'}(\mathbf{X_1}) * g^{'}(\mathbf{X_2}) = h(\mathbf{X_1}, \mathbf{X_2}) = f(\mathbf{X_1}) * g(\mathbf{X_2})$.
% \begin{equation}
%     f^{'}(\mathbf{X_1}) * g^{'}(\mathbf{X_2}) = h(\mathbf{X_1}, \mathbf{X_2}) = f(\mathbf{X_1}) * g(\mathbf{X_2}).
% \end{equation}  
Using Equation \ref{eq:prob-constraint} and rearranging, we get
\begin{equation} \label{eq:equality}
\frac{f^{'}(\mathbf{X_1})}{f(\mathbf{X_1})} = \left( \frac{g^{'}(\mathbf{X_2})}{g(\mathbf{X_2})} \right)^{-1}. 
\end{equation}
From \textbf{A.1}, the LHS and RHS in equation \ref{eq:equality} are functions of independent sets of variables, $\mathbf{u}$ and $\mathbf{w}$. This implies that $\mathbf{X_1}$ can be varied independent of $\mathbf{X_2}$ by varying $\mathbf{u}$ but keeping $\mathbf{v}$ and $\mathbf{w}$ fixed. Since $f(\cdot)$ is a non-constant function of $\mathbf{u}$ and the RHS does not change, the ratio in the LHS must be constant for all values of $\mathbf{u}$. Since this is true for all values of $\mathbf{v}$ and $\mathbf{w}$, the estimated functions are a constant multiple of the true functions for all values of $\mathbf{X_1}$ and $\mathbf{X_2}$.
\end{proof}

Next, we state another assumption needed to identify the scale parameter. Although this assumption can be verified, it is rarely satisfied in real data. This fact is later used to illustrate the need for an alternate strategy to identify the scale of estimates of $f(\cdot)$ and $g(\cdot)$.
\begin{enumerate}
    \item[\textbf{A.3}] We have $f(\cdot) = 1$ and $g(\cdot) = 1$ for at least some customers. It is a very restrictive assumption, in that one is highly unlikely to observe such high probability for the events of interest. 
\end{enumerate}

\begin{corollary} \label{corollary:scaleident}
Given \textbf{A.1}, \textbf{A.2} and \textbf{A.3}, the scale parameter is equal to 1 and the unobserved events are exactly identified.
\end{corollary}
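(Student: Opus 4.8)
The plan is to leverage Theorem~\ref{theorem:identuptoscale} directly and then pin down the scale using the probability constraints from Equation~\ref{eq:prob-constraint}. From the theorem we already know the estimated functions differ from the true functions by a single constant factor: there exists a constant $k > 0$ with $f^{'}(\mathbf{X_1}) = k\, f(\mathbf{X_1})$ for all $\mathbf{X_1}$, and combining this with Equation~\ref{eq:equality} gives $g^{'}(\mathbf{X_2}) = g(\mathbf{X_2})/k$ for all $\mathbf{X_2}$. The corollary then reduces entirely to showing that $k = 1$.

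First I would invoke \textbf{A.3} on the $f$-side. By assumption there is at least one customer whose covariates $\mathbf{X_1}$ satisfy $f(\mathbf{X_1}) = 1$. Substituting into $f^{'} = k f$ yields $f^{'}(\mathbf{X_1}) = k$. Since $f^{'}$ is itself modeled as a probability function, Equation~\ref{eq:prob-constraint} forces $f^{'}(\mathbf{X_1}) \in (0,1]$, and hence $k \le 1$. Symmetrically, I would apply \textbf{A.3} on the $g$-side: there is a customer with $g(\mathbf{X_2}) = 1$, so $g^{'}(\mathbf{X_2}) = 1/k$, and the probability constraint $g^{'}(\mathbf{X_2}) \in (0,1]$ yields $1/k \le 1$, i.e. $k \ge 1$. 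Combining the two inequalities sandwiches the scale to $k = 1$, so that $f^{'} = f$ and $g^{'} = g$ everywhere, establishing exact identification.

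The content here is light, so I do not anticipate a genuine technical obstacle; the proof is essentially a two-sided squeeze. The one subtlety worth flagging is that a single application of \textbf{A.3} is insufficient: using only $f = 1$ bounds $k$ from above, while using only $g = 1$ bounds it from below, so both halves of the assumption are required to trap $k$ at exactly $1$. I would also make explicit that the constraint of Equation~\ref{eq:prob-constraint} is being applied to the \emph{estimated} functions $f^{'}, g^{'}$ and not merely to the true functions, which is justified because $f^{'}$ and $g^{'}$ are modeled as non-zero probability functions in the proof of Theorem~\ref{theorem:identuptoscale}.
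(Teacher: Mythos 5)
Your proposal is correct and follows essentially the same route as the paper: the paper likewise applies \textbf{A.3} to obtain $f^{'} = c$ and $g^{'} = c^{-1}$ at the relevant points, then notes that Equation~\ref{eq:prob-constraint} holds for $f^{'}(\cdot)$ and $g^{'}(\cdot)$ as well, forcing $c = 1$. Your write-up merely makes the two-sided squeeze explicit where the paper states it compactly.
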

\begin{proof}
From Theorem \ref{theorem:identuptoscale} and Assumption \textbf{A.3}, $f(\mathbf{X_1}) = c^{-1}*f^{'}(\mathbf{X_1}) = 1$ and $g(\mathbf{X_2}) = c*g^{'}(\mathbf{X_2}) = 1$. Since equation \ref{eq:prob-constraint} applies to $f^{'}(\cdot)$ and $g^{'}(\cdot)$ also, the only possible value for $c$ is $1$.
\end{proof}

The above proof can be easily extended to three or more unobserved functions where the observed composite event is a product of all of them. Note that a similar proof works in the setting where the observed event is a known weighted sum of two unobserved events where instead of the scale factor, the estimated functions are misaligned by positive and negative offset values respectively, of equal magnitude. The equality corresponding to Equation~\ref{eq:equality} is $a(f(\mathbf{X_1}) - f^{'}(\mathbf{X_1})) = b(g^{'}(\mathbf{X_2}) - g(\mathbf{X_2}))$.
% \begin{equation}
%     a(f(\mathbf{X_1}) - f^{'}(\mathbf{X_1}))= b(g^{'}(\mathbf{X_2}) - g(\mathbf{X_2}))
% \end{equation}
We leverage this proof in some of the involved scenarios we encounter in the experiments on real data later.

\subsubsection{Validity of Assumptions}
\label{subsubsec:validity-of-assumptions}
The above proof shows that the unobserved events are identified if the covariates determining them have independent components (Assumption \textbf{A.1}). We experiment on synthetic data to test sensitivity of our model to this assumption. The results produced by our model even when \textbf{A.1} is violated are promising in comparison to the setting where \textbf{A.1} holds. Moreover, given the large number of features compared to the small number of events to be estimated in any real-world dataset, it is highly unlikely that the unobserved events depend only on shared features and have no independent covariates. In fact, real data used in this paper for experimentation does exhibit independent variation for the unobserved estimated functions, known from domain knowledge.
%SS comment: we can add how targeting might be based on likelihood of conversion, whereas click action is based on interestingness of the message.

We test our model in scenarios where \textbf{A.2} is violated in synthetic data. The good performance in this setting indicates that, in practice, the model is able to identify the covariates that determine the respective events. We also evaluate our model in a setting where both \textbf{A.1} and \textbf{A.2} are violated together.

%\subsubsection{Identification of Scale Factor without Assumption \textbf{A.3}}
The third assumption (Assumption \textbf{A.3}), as stated earlier, might not be satisfied in real data. In absence of \textbf{A.3}, the two functions are identified only up to a scale. Therefore, we present an alternate strategy to identify the scale even when \textbf{A.3} is invalid. We do not observe the events estimated by $f^{'}(.)$ and $g^{'}(.)$, however, it is possible to get data at the aggregate level. For example, it is difficult to know which customer was shown a search ad but it might be easy to get data on the fraction of customers who were shown a search ad. We use this information for identifying the scale parameter of the unobserved events. This approach is discussed next.
\vspace{-0.5em}

\subsection{Estimation}
\label{subsec:estimation}
The proposed solution strategy follows from the above proof. We use a simple multi-layer perceptron (MLP) neural network architecture to estimate the unobserved events of interest that constitute the observed composite events as probability functions of the input features. The output layer corresponding to the estimated values of the unobserved events therefore has a softmax activation function, constraining the output between 0 and 1. The observed composite event is estimated from the output of the network using the known functions. In Example 1, the estimated value of $P_{a,c}$ is obtained as a simple product of the estimations for $P_c$ and $P_a$ by the network. The network parameters are updated based on a binary cross-entropy (BCE) loss function applied to the estimated and observed values of the composite event.

As explained in Subsection \ref{subsubsec:validity-of-assumptions}, a simple loss function based on observed composite events can help identify the unobserved events only up to a scale in a realistic setting and the scale needs to be identified using the aggregate level data. Therefore, we define a novel custom loss term based on the difference between the sample average of the estimated probabilities and the actual aggregate fractions. This aggregate loss term, when added to the BCE loss function, helps in learning the unobserved events at their correct scales. Further, we also perform exponential smoothing of this added term across batches of data to guard against any drastic variation across these batches. Therefore the different training strategies, in terms of the loss function, that are employed here are:
\begin{itemize} 
    \item[\textbullet] \texttt{BCEL} - using a simple binary cross entropy (BCE) loss function for each of the observed variables and summing up,
    \begin{equation}
    \label{eq:BCE_loss}
        %\mathcal{L}_b = \sum_{V \in \mathcal{V}} \frac{1}{|\eta_b|} \sum_{i \in \eta_b} -\{v_i * log(\hat{v}_i) + (1-v_i)*log(1-\hat{v}_i)\}
        \mathcal{L}_b = \sum_{Y \in \mathcal{Y}} \frac{1}{|\eta_b|} \sum_{i \in \eta_b} -\{y^i * log(\hat{P}_Y^i) + (1-y^i)*log(1-\hat{P}_Y^i)\}
    \end{equation} 
    where $\mathcal{L}_b$ is the BCE loss of data batch $b$, $\eta_b$ is the set of data samples in $b$, $\mathcal{Y}$ is the set of all observed variables, $y^i$ is the true value of the $i^{th}$ sample of variable $Y$ and $\hat{P}_Y^i$ is the estimate of $P_Y$, the probability $\prob(Y=1)$. Note that $\hat{P}_Y^i$ may be obtained by computing the product of the estimated probabilities of other observed and unobserved variables. 
    
    \item[\textbullet] \texttt{AGGL} - As noted above, the scale of the unobserved events is not identified without Assumption \textbf{A.3}. An additional aggregate loss term is added to the loss function to enable the model to learn the event probabilities at the correct scale factor. This loss term is based on the difference in the estimated and actual probabilities at the aggregate level. It is calculated as $\Delta_b = \left (P_{Y1} - \hat{P}_{Y1}^b \right )^2 + \left (P_{Y2} - \hat{P}_{Y2}^b \right )^2$,
%     \begin{equation}
%     \label{eq:agg_loss}    
% %        \Delta_b = \sum_{Z \in \mathcal{Z}}\left (P_Z - (\widehat{P_Z})_b \right )^2
%         \Delta_b = \left (P_{Y1} - \hat{P}_{Y1}^b \right )^2 + \left (P_{Y2} - \hat{P}_{Y2}^b \right )^2
%     \end{equation}
    where $\Delta_b$ is the aggregate loss of batch $b$, $P_{Yj}$ is the actual aggregate probability of the event $(Y_j=1)$, i.e. fraction of all samples where variable $Y_j=1$, $\hat{P}_{Yj}^b = |\eta_b|^{-1}\sum_{i\in \eta_b} \hat{P}_{Yj}^i$ is the estimated value of the same aggregate probability over batch $b$. 
    
    The overall loss function in this case is $\mathcal{L}_b + \lambda \Delta_b$ for some constant weight $\lambda$. Minimization of the loss function with aggregate loss, $\Delta_b$ over each training batch allows identification of correct scale of the probabilities of the unobserved events.

    \item[\textbullet] \texttt{SAGG} - The loss function in this case takes exponential smoothing of the aggregate loss term $\Delta_b$ over different mini-batches of data. Since we estimate low probability events in our experiments, the aggregate loss may vary drastically across batches and hence needs to be smoothed by modifying the estimated aggregate probability to $\hat{P}_{Yj}^{sb}$.
    \begin{equation}
    \label{eq:smoothing}
        \hat{P}_{Yj}^{sb}=\begin{cases}
        \hat{P}_{Yj}^{b} & \text{ if } b=0 \\ 
        \alpha*\hat{P}_{Yj}^{b} + (1-\alpha) * \hat{P}_{Yj}^{sb-1}& \text{ otherwise }
        \end{cases}
    \end{equation}
    where $\alpha$ is the smoothing weight and $\hat{P}_{Yj}^{sb}$ is the smoothed version of the estimated aggregate probability. This is similar to Batch Renormalization proposed in \cite{Ioffe2017BatchRenormalization}.
\end{itemize}

Note that even though we prove identifiability of the unobserved events under certain settings given by the assumptions in subsection \ref{subsec:identifiability}, we test and show that our solution methodology performs well in settings where the assumptions are violated.

The next section describes the datasets used to conduct the experiments.

% However, even if there is no aggregate data available, the independence condition can be exploited to identify the scale parameter. 
% \begin{theorem} \label{theorem:momentcondition}
% Given Equation~\ref{eq:prodprob} and assumption \textbf{A.3}, we have $\expect_{X_1,X_2}(h(\cdot)) = \expect_{X_1}(f(\cdot))*\expect_{X_2}(g(\cdot))$.
% \end{theorem}
% \begin{proof}
% The independence assumption (\textbf{A.3}) implies:
% $$\expect_{X_1,X_2}(f(\cdot)*g(\cdot)) = \expect_{X_1}(f(\cdot))*\expect_{X_2}(g(\cdot)).$$
% \end{proof}
% The above result pins down the expected value of the observed event as a product of the expected values of the unobserved events. The second approach to identify the scale parameter is by minimizing the difference between the sample mean of the observed event and the estimated means of the unobserved events. \ayush{this equation needs to be modified}
% \begin{equation} \label{eq:momentmin}
% \min_c \left(\frac{1}{n}\sum_i Y_i - \frac{1}{n}\sum_i \mathbf{I}(f_{i}\geq 0.5c)*\frac{1}{n}\sum_i \mathbf{I}(cg_{i} \geq 0.5 )\right)^2
% \end{equation}
% Where $Y_i$ is the event corresponding to $P_{a,c}$ in data for $i^{th}$ customer.
% This can be done in following two ways:
% \begin{enumerate}
%     \item As a post processing step after the model is estimated.
%     \item By including Equation~\ref{eq:momentmin} directly in the loss function. We follow the literature on using a moment conditions in a neural network loss function \cite{ravuri2018PMLR}.
% \end{enumerate}
\vspace{-0.5em}
\section{Data}
The approach presented in Section~\ref{sec:model} is tested on a real customer behavior dataset. This dataset allows us to partially test the approach by artificially suppressing the observed data. In addition, for testing the assumptions and refining the approach, we use synthetic datasets generated under three different settings.
\vspace{-0.5em}
\subsection{Customer Behavior Data}
\label{subsection:realdata}
The real data is based on recorded interactions of customers with a brand. The data is from a product company that has a website and focuses marketing efforts on digital channels. There are multiple sources of data that record interactions and transaction between the firm and its customers. 
% re-word this onward
These interaction events are recorded in four different data sources - web-analytics data, display ad impression data, email interaction data, and product usage data. We merge these data sources. 
Web-analytics data is stored in the form of a clickstream that records the online activities of a customer on the company's website. 
Roughly, there is one row of data for each URL visited on the company's website. 
These visits include pages with information on the product features, product help, download trial versions or checkout.
Each row contains information about the customer's device, geography, source of visit, URL, time-stamp, product purchased, etc.
% up to here
The visits from the search channel are recorded in the web-analytics dataset as well.
When a customer performs keyword search on a browser, the firm may decide to algorithmically bid on the search keyword.
A link to the firms online properties may be shown to the customer through a search ad or an organic search result. 
Once the customer clicks on the link, the data is recorded in the firm's clickstream. 
Hence, we do not observe the search event and it is not possible to track the ad shown event when the customer visits the website through an organic search link.
For the Search use-case, we use a dataset of 154982 customers. 
Of these, 1486 customers clicked on a paid search ad and 7733 clicked on an organic search result.
The web-analytics dataset is used to create features to predict email and search related events of interest.

% re-word this onwards
% The striked out text can be excluded
%\st{The display ad data provides information on ad impression on company's own website as well as other websites. 
%This dataset stores which ads were shown to the customer and whether the customer clicked on the ad and many other ad-related properties.
% up to here
%The display dataset contains information on display impression as well as click.}
% re-word this onwards
The email interaction dataset consists of information related to the emails sent by the organization to its customers. 
This dataset includes information such as whether a customer opened the email, clicked on a link in the email, unsubscribed to emails from the company, description of the email, etc.
The experiments on the email data are carried out on a set of 174059 customers.
Out of these customers, 38539 were sent an email, 28041 had opened the email and 1873 had clicked on the email. 

Finally, product usage data contains information on a customer's interactions with company's applications. 
Each row of the data stores information on the events such as application launch, application download etc.
Each dataset described above uses the same identifier for the customer, this identifier is used for merging the datasets.

In summary, the list of observed and unobserved events is as follows: 1) Search channel: Unobserved - Customer searches or not, Ad shown or not; Observed - Ad Click, Organic Search Click; 2) Email channel: Observed - Email Sent, Email opened, Email clicked. Note that, in experiments, email-sent event is hidden from the algorithm and used only for validation of the approach. More on this in Section~\ref{sec:experiments}.

% \begin{enumerate}
%     \item Search channel: Unobserved - Customer searches or not, Ad shown or not; Observed - Ad Click, Organic Search Click
% %    \item \st{Display channel: Unobserved - Ad seen; Observed - Ad impression, Ad click}
%     \item Email channel: Observed - Email Sent, Email opened, Email clicked. Note that, in experiments, email-sent event is hidden from the algorithm and used only for validation of the approach. More on this in Section~\ref{sec:experiments}.
% \end{enumerate}

For feature creation, we adopt a two period approach - that is, for each customer, features are constructed from his interaction with the brand for a fixed period - and evaluations are done for observations post this period. The data comprises  various events - when a customer downloaded an app, when he was shown an ad, when he clicked on a paid search etc. For each customer we extracted 144 features from this data - each of  these features is a measure of customer interaction for a particular event. Every experiment that we perform uses these 144 features to predict outcomes. For example, using the information of when customers were sent email, we constructed features for recency of email sent, frequency - the number of times  he was sent email over the period of analysis. 

In summary each row of the data contains 144 customer features, Search Ad Click, Organic Search Click, Email Sent, Email Opened and Email Click during the post feature creation period.
% up to here
\vspace{-0.5em}
\subsection{Synthetic Data}
\label{subsection:syntheticdata}
The customer behavior dataset is a good source to partially validate the usefulness of the approach on a real-world data. However, in absence of ground truth and uncertainty about whether the assumptions are valid, it is important to test the approach under laboratory conditions. For this purpose, the proposed approach is tested on synthetically generated datasets. In synthetically generated data, robustness of the approach is tested under different settings of Assumptions \textbf{A.1} and \textbf{A.2} and when full support assumption (\textbf{A.3}) is violated.

The synthetic data consists of four scalar input features denoted by $\mathbf{X} = (x_1, x_2, x_3, x_4)^T$, two unobserved binary variables $Y_1$, $Y_2$, and one observed binary output variable $Y$. Each of the binary variables also has an associated probability which, in fact, determines its value - $0$ or $1$. Therefore, the probabilities of the events $(Y_1 = 1)$ and $(Y_2 = 1)$ are determined first, as sigmoids of linear functions of the input features. The set of features that determine $Y_1$ and $Y_2$ are $\mathbf{X_1}$ and $\mathbf{X_2}$ respectively where $\mathbf{X_1}, \mathbf{X_2} \subseteq \mathbf{X}$. The probability of success for the observed outcome $Y$ is the product of the probabilities of success for $Y_1$ and $Y_2$. The four features are sampled from zero mean Gaussian distributions with the standard deviations varying between $1$ and $5$. The total number of samples in the dataset are $100000$ which are randomly divided into training, validation and test sets of sizes $55000$, $20000$ and $25000$ respectively. Finally, $Y$, $Y_1$ and $Y_2$ are generated by performing Bernoulli trials with the respective probabilities of success.

There are different scenarios that we test for in our experiments and each involves a different data generation process. They are as follows:

\subsubsection{Independent Covariates and Known \texttt{(IND COV KWN)}}
This is the case when the features determining $Y_1$ and $Y_2$ are mutually exclusive and therefore independent. Thus, $\mathbf{X_1}=\{x_1, x_2\}$ and $\mathbf{X_2}=\{x_3, x_4\}$. The identity of the features that determine $Y_1$ and $Y_2$ is also known to the modeler. This is the setting where both \textbf{A.1} and \textbf{A.2} are satisfied.

\subsubsection{Independent Covariates and Unknown \texttt{(IND COV UNK)}}
In this case the data generating process is same as above whereby the two unobserved variables $Y_1$ and $Y_2$ are functions of mutually exclusive set of features. The only difference is that during modeling it is not known which input features determine which variable. This is the usual scenario a modeler is likely to face where \textbf{A.1} holds but not \textbf{A.2}.

\subsubsection{Partial Overlap and Unknown \texttt{(PAR OV UNK)}}
Here, we consider a case in which the two unobserved variables share some covariates but still have some independent components. The goal is to test the approach under the realistic scenario where some features are shared. Here, $\mathbf{X_1}=\{x_1, x_2, x_3\}$ and $\mathbf{X_2}=\{x_2, x_3, x_4\}$.
% \begin{equation*}
%     \centering
%     \begin{split}
%         P_{Y_1} = \sigma(w_{10} +w_{11}x_1 + w_{12}x_2 + w_{13}x_3)\ &;\ Y_1 = \mathcal{B}(P_{Y_1})\\
%         P_{Y_2} = \sigma(w_{20} +w_{22}x_2 + w_{23}x_3 + w_{24}x_4)\ &;\ Y_2 = \mathcal{B}(P_{Y_2})\\
%         P_Y = P_{Y_1} * P_{Y_2}\ &;\ Y = \mathcal{B}(P_{Y})
%     \end{split}
% \end{equation*}
In this case, the modeler does not know which features determine which event. Therefore, this case also satisfies \textbf{A.1} but not \textbf{A.2}.

\subsubsection{Complete Overlap \texttt{(COM OV)}}
This is a case where the two unobserved variables share all the features with a perfect overlap and without any independent component. Although this case is unlikely to occur in a real-world setting, we use this to test our model when there is no independent variation in data. Since all the 4 covariates determine both the variables, $\mathbf{X_1}=\{x_1, x_2, x_3, x_4\}$ and $\mathbf{X_2}=\{x_1, x_2, x_3, x_4\}$. This is the only setting where Assumption \textbf{A.1} is violated.
\\
\\
In all the above cases, the probabilities and the binary variables are given by $P_{Y_1} = \sigma(\mathbf{w_1^T X_1}), Y_1 = \mathcal{B}(P_{Y_1})$; $P_{Y_2} = \sigma(\mathbf{w_2^T X_2}), Y_2 = \mathcal{B}(P_{Y_2})$, and $P_Y = P_{Y_1} * P_{Y_2}, Y = \mathcal{B}(P_{Y})$.
Here $\mathbf{w_1}$ and $\mathbf{w_2}$ are weight vectors of same cardinality as $\mathbf{X_1}$ and $\mathbf{X_2}$ respectively. Further, $P_{Y_1}, P_{Y_2}, P_Y$ are the probabilities $\prob(Y_1=1), \prob(Y_2=1), \prob(Y=1)$ respectively and $\mathcal{B}(p)$ is Bernoulli trial with probability of success equal to $p$. 
\vspace{-0.5em}
\section{Experiments} \label{sec:experiments}
We perform various experiments to test our solution approach proposed in Section \ref{sec:model}. In all the experiments, we compare the three training strategies described in Subsection \ref{subsec:estimation}:
\begin{itemize} 
    \item[\textbullet] \texttt{BCEL} - using a simple binary cross entropy (BCE) loss
    \item[\textbullet] \texttt{AGGL} - including an additional aggregate loss term to estimate the scale
    \item[\textbullet] \texttt{SAGG} - performing exponential smoothing of the aggregate loss term to prevent drastic variations across mini-batches
\end{itemize}
The estimation results are evaluated by comparing the estimated and actual probability of each variable in test data. The correctness of the estimated value is defined in terms of two error metrics - Mean Squared Error (MSE = $ \frac{1}{|\eta_t|}\sum_{i \in \eta_t}\left ( P_{Yj}^i - \hat{P}_{Yj}^i \right )^2$) and Mean Absolute Percentage Error (MAPE= $ \frac{1}{|\eta_t|}\sum_{i \in \eta_t}\frac{|P_{Yj}^i - \hat{P}_{Yj}^i|}{P_{Yj}^i} $). Where $\eta_t$ is the set of all test samples, $P_{Yj}^i$ is the actual probability of $Y_j$ for the $i^{th}$ sample and $\hat{P}_{Yj}^i$ is the corresponding estimated probability.
% \begin{equation*}
% \begin{split}
%     MSE &= \frac{1}{|\eta_t|}\sum_{i \in \eta_t}\left ( P_{Yj}^i - \hat{P}_{Yj}^i \right )^2\\
%     MAPE &= \frac{1}{|\eta_t|}\sum_{i \in \eta_t}\frac{|P_{Yj}^i - \hat{P}_{Yj}^i|}{P_{Yj}^i}
% \end{split}
% \end{equation*}
% where $\eta_t$ is the set of all test samples, $P_{Yj}^i$ is the actual probability of $Y_j$ for the $i^{th}$ sample and $\hat{P}_{Yj}^i$ is the corresponding estimated probability.
\vspace{-0.5em}
\subsection{Experiments on Synthetic Data}
\setlength{\tabcolsep}{4pt}
\begin{table*}[!ht]
\centering
\caption{Comparison of MSE\textsuperscript{*} and MAPE\textsuperscript{$\dagger$} for $Y$, $Y_1$, $Y_2$ using the three training strategies in the 4 synthetic datasets in absence of Assumption \textbf{A.3}.}
\vspace{-0.5em}
\label{tab:synthetic_low_prob}
\resizebox{0.98\textwidth}{!}{%
\begin{tabular}{cc|ccc|ccc|ccc|ccc}
\toprule
\multirow{2}{*}{\begin{tabular}[c]{@{}c@{}}\\ \ $\bm{Var}$\  \end{tabular}} & 
\multirow{2}{*}{\begin{tabular}[c]{@{}c@{}}\\ \ $\bm{Metric}$\ \ \end{tabular}} &
\multicolumn{3}{c|}{\texttt{IND COV KWN}} & \multicolumn{3}{c|}{\texttt{IND COV UNK}} & \multicolumn{3}{c|}{\texttt{PAR OV UNK}} & \multicolumn{3}{c}{\texttt{COM OV}}\\
  &  & \texttt{BCEL} & \texttt{AGGL} & \texttt{SAGG} & \texttt{BCEL} & \texttt{AGGL} & \texttt{SAGG}& \texttt{BCEL} & \texttt{AGGL} & \texttt{SAGG} & \texttt{BCEL} & \texttt{AGGL} & \texttt{SAGG} \\ 
\midrule
\ $Y$\               & \ \ \ MSE\textsuperscript{*}\ \ \    & \  0.04\   & \  0.04\   & \  0.04\ \  & \  0.14\   & \  0.14\   & \  0.14\ \ & \  0.33\   & \  \textbf{0.32}\   & \  \textbf{0.32}\ & \  \textbf{0.40}\   & \  \textbf{0.40}\   & \  0.49\   \\ 
\ (Obs)\            & \ \ \ MAPE\textsuperscript{$\dagger$}\ \ \   & \  7.84\   & \  \textbf{5.39}\   & \  5.41\  & \  13.29\   & \  \textbf{12.83}\   & \  13.05\  & \  9.68\   & \  \textbf{9.59}\   & \  \textbf{9.59}\ & \  17.76\   & \  \textbf{16.17}\   & \  20.13\  \\ 
\midrule
\ $Y_1$\              & \ \ \ MSE\textsuperscript{*}\ \ \    & \  77.55\   & \  \textbf{0.68}\   & \  0.69\ \ & \  58.07\   & \  \textbf{15.78}\   & \  16.23\ \ & \  48.77\   & \  7.67\   & \  \textbf{7.66}\ & \  51.17\   & \  11.70\   & \  \textbf{11.66}\  \\ 
\ (Unobs)\          & \ \ \ MAPE\textsuperscript{$\dagger$}\ \ \   & \  51.03\   & \  \textbf{4.54}\   & \  4.56\ \ & \  41.98\   & \  \textbf{22.89}\   & \  23.20\ \ & \  35.42\   & \  16.87\   & \  \textbf{16.85}\ & \  36.81\   & \  19.91\   & \  \textbf{19.72}\  \\
\midrule
\ $Y_2$\              & \ \ \ MSE\textsuperscript{*}\ \ \    & \  20.43\   & \  \textbf{0.10}\   & \  \textbf{0.10}\ \ & \  9.19\   & \  \textbf{1.07}\   & \  1.11\ \ & \  2.69\   & \  \textbf{2.10}\   & \  \textbf{2.10}\ & \  4.41\   & \  \textbf{4.11}\   & \  4.27\  \\
\ (Unobs)\          & \ \ \ MAPE\textsuperscript{$\dagger$}\ \ \   & \  109.58\   & \  \textbf{2.69}\   & \  2.70\ \ & \  86.28\   & \  \textbf{22.12}\   & \  22.58\ \ & \  73.05\   & \  15.34\   & \  \textbf{15.31}\  & \  69.16\   & \  \textbf{21.60}\   & \  24.95\ \\
\bottomrule
\end{tabular}%
}
\end{table*}
We validate our proposed solution on the synthetic data generated for the 4 cases as described in Subsection \ref{subsection:syntheticdata}. The model in each of the cases has 2 nodes in the output layer to predict the probability of $Y_1$ and $Y_2$ respectively. The loss is defined on the product of the 2 outputs which is observed as $Y$ in the data. The training is run for $150$ epochs on batches of $128$ data samples and the trained model at epoch with minimum validation loss is selected for testing.
Although the models are trained using the observed binary variable $Y$, the performance of these models is evaluated by validating against the underlying true probabilities of all the three variables, i.e. $P_Y, P_{Y_1}, P_{Y_2}$.
 
As discussed in Subsection \ref{subsection:syntheticdata}, we evaluate our approach on synthetic data in four different scenarios:
\begin{itemize}
    \item[\textbullet] \texttt{IND COV KWN} - Since the two set of covariates are known to independently determine the unobserved variables $Y_1$ and $Y_2$, we train two separate fully connected feed forward neural networks to learn $Y_1$ from $\{x_1,x_2\}$ and $Y_2$ from $\{x_3,x_4\}$ respectively. Both the networks have a single hidden layer of 3 nodes with ReLU activation function and 1 output node with Sigmoid activation. 
    
    \item[\textbullet] \texttt{IND COV UNK} - Since the independence of relationship on the covariates, though existing, is assumed to be unknown to the modeler, the network architecture is a fully connected feed forward one with all the features connected to both the unobserved variables. The network has a single hidden layer of size 3 and ReLU activation function while the output layer has 2 nodes with Sigmoid activation.

    \item[\textbullet] \texttt{PAR OV UNK} -  The two unobserved variables in this case cannot be determined independently. To a modeler, this case is same as \texttt{IND COV UNK} and hence the network architecture is also same as in the previous case.
    
    \item[\textbullet] \texttt{COM OV} - Since all the covariates determine both the unobserved variables, the network architecture in this case is a fully connected feed forward one just as in the previous two cases. 
\end{itemize}
\makeatletter{\renewcommand*{\@makefnmark}{*}
\footnotetext{MSE values are reported as parts per thousand, i.e. multiplied by $10^3$.}\makeatother}
\makeatletter{\renewcommand*{\@makefnmark}{$\dagger$}
\footnotetext{MAPE values are reported as percentages, i.e. multiplied by $10^2$.}\makeatother}

In the first set of synthetic data experiments, we compare the three training strategies in all of the above cases under the more practical setting of absence of Assumption \textbf{A.3}. Since we deal with low probability events in the real world scenario, we allow $P_{Y_1}$ and $P_{Y_2}$ to go up to a maximum of only $0.6$. In addition, we perform another experiment to test Corollary \ref{corollary:scaleident} on synthetic data. We do so by testing the performance of our model, essentially if the scale is identified correctly, when $P_{Y_1}$, $P_{Y_2}$ and hence $P_Y$ are all equal to $1$ for at least some instances. 
\setlength{\tabcolsep}{4pt}
\begin{table*}[!ht]
\centering
\caption{MSE\textsuperscript{*} and MAPE\textsuperscript{$\dagger$} for $Y$, $Y_1$, $Y_2$ using the three training strategies on the 4 synthetic datasets under Assumption \textbf{A.3}.}
\vspace{-0.5em}
\label{tab:synthetic_high_prob}
\resizebox{0.98\textwidth}{!}{%
\begin{tabular}{cc|ccc|ccc|ccc|ccc}
\toprule
\multirow{2}{*}{\begin{tabular}[c]{@{}c@{}}\\ \ $\bm{Var}$\  \end{tabular}} & 
\multirow{2}{*}{\begin{tabular}[c]{@{}c@{}}\\ \ $\bm{Metric}$\ \ \end{tabular}} &
\multicolumn{3}{c|}{\texttt{IND COV KWN}} & \multicolumn{3}{c|}{\texttt{IND COV UNK}} & \multicolumn{3}{c|}{\texttt{PAR OV UNK}} & \multicolumn{3}{c}{\texttt{COM OV}} \\
  &  & \texttt{BCEL} & \texttt{AGGL} & \texttt{SAGG} & \texttt{BCEL} & \texttt{AGGL} & \texttt{SAGG}& \texttt{BCEL} & \texttt{AGGL} & \texttt{SAGG} & \texttt{BCEL} & \texttt{AGGL} & \texttt{SAGG} \\ 
\midrule
\ $Y$\               & \ \ \ MSE\textsuperscript{*}\ \ \    & \  \textbf{0.11}\   & \  0.12\   & \  0.12\ \  & \  2.54\   & \  \textbf{2.49}\   & \  \textbf{2.49}\ \ & \  \textbf{2.65}\   & \  2.67\   & \  2.67\  & \  0.57\   & \  \textbf{0.55}\   & \  \textbf{0.55}\ \\ 
\ (Obs)\            & \ \ \ MAPE\textsuperscript{$\dagger$}\ \ \   & \  2.44\   & \  \textbf{2.42}\   & \  2.34\  & \  13.95\   & \  \textbf{13.49}\   & \  \textbf{13.49}\  & \  \textbf{24.00}\   & \  25.28\   & \  25.20\ & \  \textbf{9.18}\   & \  10.13\   & \  10.11\  \\ 
\midrule
\ $Y_1$\              & \ \ \ MSE\textsuperscript{*}\ \ \    & \  \textbf{0.23}\   & \  0.24\   & \  0.24\ \ & \  62.09\   & \  \textbf{10.69}\   & \  \textbf{10.69}\ \ & \  43.56\   & \  \textbf{14.73}\   & \  \textbf{14.73}\ & \  172.33\   & \  \textbf{35.06}\   & \  35.17\  \\ 
\ (Unobs)\          & \ \ \ MAPE\textsuperscript{$\dagger$}\ \ \   & \  1.34\   & \  1.34\   & \  1.34\ \ & \  22.21\   & \  \textbf{10.61}\   & \ \textbf{10.61}\ \ & \  18.88\   & \  \textbf{12.82}\   & \  \textbf{12.82}\ & \  40.24\   & \  \textbf{21.70}\   & \  21.74\  \\
\midrule
\ $Y_2$\              & \ \ \ MSE\textsuperscript{*}\ \ \    & \  0.04\   & \  \textbf{0.03}\   & \  \textbf{0.03}\ \ & \  8.72\   & \  \textbf{5.54}\   & \  \textbf{5.54}\ \ & \  8.06\   & \  7.21\   & \  \textbf{7.20}\  & \  10.17\   & \  \textbf{2.65}\   & \  2.66\ \\
\ (Unobs)\          & \ \ \ MAPE\textsuperscript{$\dagger$}\ \ \   & \  1.67\   & \  1.66\   & \  \textbf{1.56}\ \ & \  35.56\   & \  \textbf{13.07}\   & \ 13.08\ \ & \  33.47\   & \  25.11\   & \  \textbf{25.10}\  & \  125.60\   & \  \textbf{17.09}\   & \  17.19\ \\
\bottomrule
\end{tabular}%
}
\end{table*}
%Further, to ensure that Assumption \textbf{A.3} is not violated we perform this experiment for only \texttt{IND COV} case. 

\vspace{-0.5em}
\subsection{Experiments on Customer Behavior Data} \label{subsec:exprealdata}
We test our approach on the real world data of brand-customer interactions described in Subsection \ref{subsection:realdata}.
% For Aditya - I have removed the description of data from the Experiment section. Moved that to the Data section
% For Aditya - List all experiments with real data. Also add why each experiment is being conducted. It should be in the same order that we present the result. 1) Consistency experiment - What is it? And what does it show that is why we are doing it? 2) Hide an observed event. 
We perform our experiments on this dataset for the following two use cases:
\begin{itemize}
\item[\textbullet] Email - In this case, the variables that are observed in the data are the actions \{\texttt{Email Send}, \texttt{Email Open}, \texttt{Email Click}\}.  
\item[\textbullet] Search - In the search case, only two events are observed in the data \texttt{Paid Search Ad Click} and \texttt{Organic Search Click}. Hereafter, they are referred as \texttt{Ad Click} and \texttt{Organic Click} respectively.  
\end{itemize}
The experiments on the Email data, where we observe all events, is to validate the approach. This is done by suppressing observed events from the algorithm and using these as ground truth for validation. The experiments on Search data, where events are not observed, are used for weaker validation called Consistency experiments, described next, and showing how to extend the approach to more complicated events. The set of experiments performed on these datasets can be broadly categorized as Correctness and Consistency Experiments.
\subsubsection{Correctness Experiment}
Since the events of interest are not observed in the Search data, the correctness experiment is possible only on Email data. We perform this experiment in the realistic setting where an analyst has access to the data on \texttt{Email Open} and \texttt{Email Click} as these are customer actions easily recorded by analytics tools, whereas the event \texttt{Email Send} is unobserved to him as it is performed by the marketer. To simulate this setting, \texttt{Email Send} is artificially hidden from the model during training. The model is therefore trained to estimate the probabilities of the 3 unobserved events \{\texttt{Email Send}, \texttt{Email Open|Send}, \texttt{Email Click|Open}\} while the BCE loss $\mathcal{L}_b$ is defined on the 2 observed events \{\texttt{Email Open}, \texttt{Email Click}\}. The estimated probability of observed events is obtained from the network outputs as $\prob[\texttt{Open}]=\prob[\texttt{Send}]*\prob[\texttt{Open}|\texttt{Send}]$ and $\prob[\texttt{Click}]=\prob[\texttt{Open}]*\prob[\texttt{Click}|\texttt{Open}]$.

% \begin{equation}
% \label{eq:email}
%     \begin{split}
%         \prob[\texttt{Open}]&=\prob[\texttt{Send}]*\prob[\texttt{Open}|\texttt{Send}]\\
%         \prob[\texttt{Click}]&=\prob[\texttt{Open}]*\prob[\texttt{Click}|\texttt{Open}]
%     \end{split}
% \end{equation}
The architecture of the network is such that it has 3 separate fully connected units each consisting of 4 hidden layers of size 70, 40, 20 and 10 respectively and a single output node. Each unit learns to predict one of the 3 unobserved events. The probabilities of the two events observed by the algorithm are estimated as in above equations.
%~\ref{eq:email}.
\makeatletter{\renewcommand*{\@makefnmark}{*}
\footnotetext{MSE values are reported as parts per thousand, i.e. multiplied by $10^3$.}\makeatother}
\makeatletter{\renewcommand*{\@makefnmark}{$\dagger$}
\footnotetext{MAPE values are reported as percentages, i.e. multiplied by $10^2$.}\makeatother}

The dataset is broken down into training, validation and test sets of 95297, 35247 and 43515 customers respectively. The training is run for a maximum of 50 epochs on batches of 1024 instances, while the validation loss is used to select and return the best model. The performance of the model is evaluated in terms of MSE and MAPE between the predicted and ground truth probabilities of the 5 variables. In order to generate the ground truth probabilities, we train a XGBoost Classifier on the 3 events observed in data and probabilities of the other 2 events are obtained using equations above. We compare our estimated probabilities to this ground truth because this is what one would do for prediction, had \texttt{Email Send} been observed by the modeler. Note that in the \texttt{AGGL} training scenario, the aggregate loss term $\Delta_b$ includes the sample average of all the 5 variables, obtained easily using the ground truth classifier model.
\setlength{\tabcolsep}{3pt}
\begin{table*}[!ht]
\centering
\caption{Comparison of MSE\textsuperscript{$\ddag$} and MAPE for all the variables in the Email data using the three training strategies.}
\vspace{-0.5em}
\label{tab:real_data_correct}
\resizebox{0.98\textwidth}{!}{
\begin{tabular}{c|ccc|ccc|ccc|ccc|ccc}
\toprule
$\ \ \bm{Variable}$\ \ &
\multicolumn{3}{c|}{\texttt{Send} (Unobs)} & \multicolumn{3}{c|}{\texttt{Open|Send} (Unobs)} & \multicolumn{3}{c|}{\texttt{Open} (Obs)} &
\multicolumn{3}{c|}{\texttt{Click|Open} (Unobs)} & \multicolumn{3}{c}{\texttt{Click} (Obs)} \\
\cmidrule{1-1}
$\ \bm{Metric}$\ & \texttt{BCEL} & \texttt{AGGL} & \texttt{SAGG} & \texttt{BCEL} & \texttt{AGGL} & \texttt{SAGG}& \texttt{BCEL} & \texttt{AGGL} & \texttt{SAGG} 
& \texttt{BCEL} & \texttt{AGGL} & \texttt{SAGG} & \texttt{BCEL} & \texttt{AGGL} & \texttt{SAGG} \\ 
\midrule
\ \ \ MSE\textsuperscript{$\ddag$}\ \ \    & \  10.83\   & \  3.39\   & \  \textbf{3.27}\ \  & \  385.9\   & \  333.2\   & \ \textbf{332.8}\ \ & \  \textbf{0.83}\   & \   \textbf{0.83}\   & \  0.89\ & \  \textbf{0.08}\   & \  \textbf{0.08}\   & \  0.09\ \  & \  0.07\   & \  0.07\   & \  0.07\  \\ 
\ \ \ MAPE\ \ \   & \  8.88\   & \  \textbf{0.96}\   & \ \textbf{ 0.96}\ \ & \  \textbf{0.68}\   & \  0.78\   & \  0.78\ \ & \  0.72\   & \  \textbf{0.66}\   & \  0.68\ & \  2.70\   & \  \textbf{0.80}\   & \  1.02\ \ & \  1.85\   & \  \textbf{0.43}\   & \  0.67\  \\ 
\bottomrule
\end{tabular}
}
\end{table*}

\subsubsection{Consistency Experiment}
Since the Search case does not have enough observed events to check the correctness of the estimated probabilities, we check the consistency of the results across different random initializations of the neural net parameters. If the unobserved events are not identified correctly and only the product (observed events) is identified, then the estimate of the unobserved events across runs should generate different results.

The model in this case is trained to directly estimate the probabilities of 5 unobserved events \{\texttt{Search}, \texttt{Ad Shown|Search}, \texttt{Ad Click|Ad Shown}, \texttt{Organic Click|Ad Shown}, \texttt{Organic Click|Ad Not Shown} \} from the probability of the 2 observed events \{\texttt{Ad Click}, \texttt{Organic Click}\} as follows:
\begin{equation}
\label{eq:search}
    \begin{split}
        \prob[\texttt{Ad Click}] &= \prob[\texttt{Ad Shown}]*\prob[\texttt{Ad Click|Ad Shown}]\\
        \prob[\texttt{Ad Shown}] &= \prob[\texttt{Search}]*\prob[\texttt{Ad Shown|Search}]\\
        \prob[\texttt{Ad Not Shown}] &= \prob[\texttt{Search}]*(1-\prob[\texttt{Ad Shown|Search}])\\
        \prob[\texttt{Organic Click}] &= \prob[\texttt{Ad Shown}] * \prob[\texttt{Organic Click|Ad Shown}] \\
        &+ \prob[\texttt{Ad Not Shown}] \\
        &*\prob[\texttt{Organic Click|Ad Not Shown}]
    \end{split}
\end{equation}
Note that \texttt{Ad Shown} and \texttt{Ad Not Shown} are intermediate events calculated from the estimated events. \texttt{Ad Click} is first used to learn the probabilities of \texttt{Ad Shown} and \texttt{Ad Click|Ad Shown} which is then used to estimate \texttt{Search}'s and \texttt{Ad Shown|Search}'s probabilities. Finally, \texttt{Organic Click} is used to learn the probabilities of \texttt{Organic Click|Ad Shown} and \texttt{Organic Click|Ad Not Shown} by modeling the last equation in (\ref{eq:search}).

The BCE loss $\mathcal{L}_b$ is defined on the 2 observed variables \{\texttt{Organic Click}, \texttt{Ad Click}\} while the aggregate loss $\Delta_b$ is computed over all 8 variables - the 5 variables directly estimated by the model, the 2 observed variables and \texttt{Ad Shown}. In $\Delta_b$, the sample average of 3 events \{\texttt{Search}, \texttt{Ad Shown|Search}, \texttt{Organic Click|Ad Shown}\} is obtained from domain knowledge while the sample average of other variables is computed using (\ref{eq:search}). 

Just as in the correctness experiment, the network consists of 5 separate fully connected units one for each of the predicted variables. The architecture of each unit is same as earlier. The dataset of 154982 customers is divided into training, validation and test sets in the proportion $0.55:0.2:0.25$ to enable model selection on the basis of the validation loss.

As mentioned earlier, the results in this case are evaluated by examining the consistency of the results across different initialization. The consistency is quantified as the fraction of samples that are classified likewise in 2 different runs of the model. The classes are obtained by hard limiting the estimated probabilities at $0.5$. This is repeated using each of the 3 training strategies and the consistency results are compared for all the 8 variables.
\setlength{\tabcolsep}{0pt}
\begin{table}[!ht]
\centering
\caption{Comparison of percentage consistency in estimation of all the variables in the Search data using the three training strategies.}
\vspace{-0.5em}
\label{tab:real_data_consistent}
\resizebox{0.48\textwidth}{!}{%
\begin{tabular}{ccccc}
\toprule
 & \texttt{Search} & \texttt{AdShow|Search} & \texttt{Ad Show} & \texttt{OrgClick|Ad}  \\
 & (Unobs) & (Unobs) & (Unobs) & (Unobs)  \\
\midrule
\texttt{BCEL} &   92.15   &   98.43   &   100.0   &  91.69     \\ 
\texttt{AGGL} &   \textbf{99.46}   &   98.29   &   100.0   &   \textbf{96.54}    \\ 
\texttt{SAGG} &  99.23   &   \textbf{98.68}   &   100.0   &   96.48   \\ 
\midrule
& \texttt{OrgClick|NoAd} & \texttt{OrgClick} & \texttt{AdClick|Ad} & \texttt{AdClick} \\
& (Unobs) & (Obs) & (Unobs) & (Obs)  \\
\midrule
\texttt{BCEL} &   98.45   &   99.70   &   \textbf{99.42}   &   100.0     \\ 
\texttt{AGGL} &   \textbf{100.0}   &   \textbf{99.87}   &   91.76   &   100.0    \\ 
\texttt{SAGG} &   99.98   &   99.66   &   90.79   &   100.0    \\ 
\bottomrule
\end{tabular}
}
\end{table}

\vspace{-0.5em}
\section{Results}
% For all the tabular results to follow, the labelling of the experiments is as follows: 
% % To DO: Ayush - This can be moved to the Experiment section
% \begin{itemize} 
%     \item \texttt{BCEL} - with simple BCE loss, %A
%     \item \texttt{POST} - with post-processing, %B
%     \item \texttt{REGL} - with normal regularizer in the loss function, %C
%     \item \texttt{SREG} - with smoothed regularizer in the loss function, %D
%     \item \texttt{PORE} - with smoothed regularizer in the loss function + post-processing. %E
% \end{itemize}

This section contains a discussion on the results obtained for the experiments described in Section \ref{sec:experiments}. For all the experiments involving exponential smoothing we use a fixed value of the smoothing parameter $\alpha=0.8$. Other parameters such as the relative weight $\lambda$ of the aggregate loss $\Delta_b$ are tuned manually with the help of grid search.
\vspace{-0.5em}
\subsection{Results on Synthetic Data}
% The five synthetic data experiments described in Section~\ref{sec:experiments} are conducted on three synthetically generated dataset (see Section~\ref{subsection:syntheticdata}): independent covariates determine the two unobserved events, independent covariates but which feature determine which event is not known, and partial but not full overlap. 

% and therefore the scale factor is not identified correctly with simple BCE loss. MAPE is a good indication of whether the scale is identified correctly or not. The high values of MAPE under \texttt{BCEL} is not surprising. As shown in Section~\ref{sec:model}, the unobserved events are identified up to a scale.

% Looking at the results for the training scenarios \texttt{AGGL} and \texttt{SAGG} when the proposed aggregate loss is used, the value of the MAPE drops significantly as compared to \texttt{BCEL}. The low value of MAPE proves that the scale of the event probabilities is identified correctly using the aggregate loss.
 
% In the \texttt{IND COV UNK} case where the unobserved events are determined by mutually exclusive independent set of features which is unknown during the modeling stage, the proposed approach works well when we add the aggregate loss to the loss function and take exponential smoothing over the mini-batches. This result shows that the approach works well even if we do not know the which subset of the features determines an unobserved event. 

The results in Table~\ref{tab:synthetic_low_prob} show that the proposed approach of using an aggregate loss leads to a large reduction in the estimation error as measured by the MSE and MAPE.
This is for the case when Assumption \textbf{A.3} is not satisfied. 
Hence, the probabilities of the unobserved events are identified only up to a scale with simple BCE loss. This is corroborated by the large values of MAPE under \texttt{BCEL} as MAPE is a good indicator for correct identification of scale. Adding the aggregate loss and smoothing it results in a significant improvement in the performance which can be attributed to correct identification of the scale factor. As indicated in the table, \texttt{AGGL} or \texttt{SAGG} yields the best performance in all the scenarios of synthetic data across the 3 variables. On an average, \texttt{AGGL} provides an improvement of 46\% in MSE and 50\% in MAPE over \texttt{BCEL}. Note that this is when Assumption \textbf{A.3} does not hold.
The natural question is - what happens when Assumption \textbf{A.3} is satisfied and the scale is identified as well.

Table \ref{tab:synthetic_high_prob} shows the results on synthetic data when Assumption \textbf{A.3} holds. 
The results in Table~\ref{tab:synthetic_high_prob} show that providing additional information to the model in the form of aggregate averages for the unobserved event is beneficial even when the probabilities for the unobserved events are identified. The benefit of adding the aggregate loss term is minimal in the unrealistic case where the covariates that determine the two events are independent and known. For the realistic cases \texttt{IND COV UNK} and \texttt{PAR OV UNK}, the MSE and MAPE errors are reduced on average by 33\% and 28\%, respectively. This is significant improvement over the baseline, albeit much smaller than the improvement in results in Table~\ref{tab:synthetic_low_prob} when the unobserved probabilities are identified only up to a scale.
Note that the proposed model performs significantly better than \texttt{BCEL} in the \texttt{COM OV} case also, where Assumption \textbf{A.1} does not hold.
% \texttt{PAR OV}  the result when the independence assumption (\textbf{A.3}) does not hold. 

% \shiv{In this case, the F-Score for $Y_1$ and $Y_2$ improves, whereas the F-Score for $Y$ goes down.} Again, the smooth version of the regularizer generates the best results. 
The synthetic data experiments support the theoretical results in Theorem~\ref{theorem:identuptoscale} and Corollary~\ref{corollary:scaleident} as well as validate the proposed approach to identify the scale. Next, the proposed approach is applied on the Customer Behavior Data.
\vspace{-0.5em}
\subsection{Results on Customer Behavior Data}
As noted in Section~\ref{subsec:exprealdata}, there are two set of experiments on the real data. The validation done using the synthetic data can only be performed on Email data. For Search data, we test for consistency across multiple runs to validate identification.
\subsubsection{Correctness Experiment}
% \begin{itemize}
    % \item Table \ref{tab:correctness_allobserved} contains the precision, recall and F-score results for \texttt{Email Sent}, \texttt{Email Opened} and \texttt{Email Clicked} when all the events are visible to the model during training.
    % \item Table \ref{tab:correctness_hidden} reports the same 3 metrics for \texttt{Email Sent} and \texttt{Email Opened} when the 2 events respectively are hidden from the model during training.
    % \item Table \ref{tab:correctness_openedunobserved} contains the results of how well \texttt{Email Opened} event is recovered by the model when it is assumed to be unobserved during training.
% \end{itemize}

Table \ref{tab:real_data_correct} contains the validation results on the Email data in terms of MSE and MAPE. As seen in the results on synthetic data, the proposed method of adding the aggregate loss term performs much better in identifying the unobserved event probabilities. It performs well for the observed events as well, but the improvement is much larger in case of unobserved events. The methods \texttt{AGGL} and \texttt{SAGG} help in especially reducing MAPE which is a good indication that the approach is successful in identifying the scale factor correctly. On an average over the 5 variables, \texttt{AGGL} reduces the MSE and MAPE errors by 17\% and 46\% respectively as compared to \texttt{BCEL}.
\makeatletter{\renewcommand*{\@makefnmark}{$\ddag$}
\footnotetext{MSE values are reported as percentages, i.e. multiplied by $10^2$.}\makeatother}

\subsubsection{Consistency Experiment}
Table \ref{tab:real_data_consistent} illustrates the identification results in the Search data as quantified by percentage consistency across 2 different random runs of the model.
As can be seen from the table, the estimates of the events do not differ much across different initializations. Interestingly, the results across runs become better when the aggregate loss term is used, except for the event \{\texttt{Ad Click|Ad Shown}\}. The improvement in the performance is significant in \{\texttt{Search}\} and \{\texttt{Organic Search Click|Ad Shown}\}. These results show the benefit of including the aggregate loss term in the loss function.

%%% Here, put the results for unbalanced dataset using the weighted (w0 and w1) loss function
 
% Table \ref{tab:consistency} compares the consistency results for two unobserved events each in the Search and the Email use-case. The comparison is for the following experiments:
%     \begin{itemize}
%         \item simple BCE loss function
%         \item Error minimization using post-processing
%         \item Normal regularizer augmentation in loss function
%         \item Smoothed regularizer augmentation in loss function
%         \item Smoothed regularizer + post-processing 
%     \end{itemize}

% \input{tables/RealDataConsistency.tex}
%% Now included in the real data experiments section just before the beginning of Correctness experiments. Done this for better positioning of the tables in the pdf.

%vspace{-0.5em}
\section{Conclusion and Future Work}
This paper provided an approach that can be used to identify unobserved events when a composite event, that is based on multiple unobserved or observed events, is observed. 
We have provided theoretical proof of identification under mild conditions.
The empirical approach allows inference on events in customer-brand interaction setting.
The solution can be applied in marketing touch attribution setting or for estimating a model that can be used for running simulations on marketing actions.

There is very little work in this area. 
Nonetheless, it is an important problem due to marketers spending more and more money on earned and paid channels. 
The research in this direction has a potential of improving targeting strategies on the channels that are not owned by a brand without compromising the privacy by using aggregate data. 
There are a few future directions that this research can take. 
First, the identification can be proved assuming that a consistent estimator for the probability of the observed composite event is available.
Second, the mini-batch smoothing approach used in this paper can be further refined.
Second, a segment level reporting of the aggregate statistics might further improve the results.

\bibliographystyle{ACM-Reference-Format}
\bibliography{main}

\end{document}